\documentclass[10pt, conference]{IEEEtran}
\usepackage[usenames, dvipsnames]{color}
\usepackage{subcaption}
\usepackage[cmex10]{amsmath}
\usepackage{amsthm}
\usepackage{amsfonts}
\usepackage{amssymb}
\usepackage{algpseudocode}
\usepackage{algorithmicx}
\usepackage{algorithm}
\usepackage{mathtools}
\raggedbottom

\newtheorem{theorem}{Theorem}
\newtheorem{lemma}[theorem]{Lemma}

\algnewcommand{\LineComment}[1]{\State \(\triangleright\) #1}

\begin{document}
\title{Trend Detection based Regret Minimization for Bandit Problems}

\author{\IEEEauthorblockN{Paresh Nakhe}
\IEEEauthorblockA{Max Planck Institut f{\"u}r Informatik,\\
Saarland University\\
Saarbr{\"u}cken, Germany\\
pnakhe@mpi-inf.mpg.de}
\and
\IEEEauthorblockN{Rebecca Reiffenh{\"a}user\\}
\IEEEauthorblockA{Dept. of Computer Science,\\
RWTH Aachen University\\
Aachen, Germany\\
rebeccar@cs.rwth-aachen.de}
}
\maketitle

\begin{abstract}
We study a variation of the classical multi-armed bandits problem. In this problem, the learner has to make a sequence of decisions, picking from a fixed set of choices. In each round, she receives as feedback only the loss incurred from the chosen action. Conventionally, this problem has been studied when losses of the actions are drawn from an unknown distribution or when they are adversarial. In this paper, we study this problem when the losses of the actions also satisfy certain structural properties, and especially, do show a trend structure. When this is true, we show that using \textit{trend detection}, we can achieve regret of order $\tilde{O} (N \sqrt{TK})$ with respect to a switching strategy for the version of the problem where a single action is chosen in each round and $\tilde{O} (Nm \sqrt{TK})$ when $m$ actions are chosen each round. This guarantee is a significant improvement over the conventional benchmark. Our approach can, as a framework, be applied in combination with various well-known bandit algorithms, like Exp3. For both versions of the problem, we give regret guarantees also for the \textit{anytime} setting, i.e. when length of the choice-sequence is not known in advance. Finally, we pinpoint the advantages of our method by comparing it to some well-known other strategies. 
\end{abstract}
\begin{IEEEkeywords} Multi-armed bandits, Switching regret, Trend detection \end{IEEEkeywords}

\section{Introduction}
Consider the following problem: Suppose you own an apparel store and have purchased a fixed number of ad slots on some website, say Facebook. For every time someone visits the website, you can choose a set of ad impressions to display. Let's assume that an ad here consists of an image of a clothing item and that each image is associated with a click-through-rate unknown to you. Your goal is to choose images to display such that cumulative click-through-rate is maximized. How would you choose these images? This problem comes under the domain of reinforcement learning and more specifically, multi-armed bandit learning. Contrary to supervised learning (and most of current research in statistical pattern recognition and artificial neural networks), multi-armed bandit learning is characterized by its \textit{interactive nature} between an agent and an uncertain environment. Such a learning algorithm makes its next move based on the history of its past decisions and their outcomes.

More specifically, a multi-armed bandit problem is a sequential learning problem where the learner chooses an action from a set of actions in every round. Associated with each action is a loss unknown to the learner\footnote{The case with rewards is symmetric}. The goal of the learner is to minimize the loss incurred. Performance of the learning algorithm is measured by regret, compared to a certain benchmark strategy. Conventionally, in multi-armed bandit problems, the benchmark strategy is to always choose the single best action in hindsight, i.e. an action with minimum cumulative loss. This problem has been thoroughly studied in a variety of settings \cite{Auer2002, Auer-UCB, Bubeck2010, Thompson}. A distinguishing feature of such problems is the inherent exploration-exploitation trade-off. When the losses are generated from a fixed but unknown distribution, there exist algorithms \cite{Auer-UCB, Thompson, Robbins} that can achieve regret guarantee of $O(\log T)$. On the other hand, when losses for the actions are generated under no statistical assumption, or alternately when losses are generated by an adversary, best possible regret guarantee that can be achieved is $O(\sqrt{T})$ \cite{Bubeck2010}. Recently, interest has been developing \cite{Seldin, Hazan} in the question of achieving non-trivial regret guarantees when the loss model is semi-structured. Intuitively, more structure in the losses should enable more exploitation and hence allow for better regret guarantees. Along the lines of some of the recent work~\cite{Seldin}, we also define models exhibiting a certain degree of structure.

Often the real world problems do not exhibit adversarial behaviour and in many cases, the losses of different actions follow a trend structure, i.e. when one action is consistently better than others in a certain interval. 
For such more specialized models, the standard techniques prove insufficient since they do not take advantage of these properties. In this paper, we address this deficiency using the paradigm of trend detection. Broadly, we propose a strategy that keeps track of the current trend and restarts the regret minimization algorithm whenever a trend change is detected. This allows us to give regret guarantees with respect to a strategy that chooses the best action in each trend. This is a significantly stronger benchmark than the one conventionally considered. The regret guarantee with respect to this benchmark is also called switching regret.

More importantly, our proposed strategy is not specific to a particular regret minimization algorithm unlike the approaches in some recent works. In this paper, we use Exp3 as the underlying regret minimizing algorithm for its simplicity and almost optimal regret guarantee \cite{Auer2002}. However, one can use any other algorithm and analyze it in a similar way. Because of this modular structure of the algorithm, we can extend the arguments and proofs for the conventional multi-armed bandits problem to a more general setting where instead of a single action, the learner chooses multiple actions in each round \cite{Uchiya}. This problem has been studied in stochastic \cite{Kveton} and adversarial \cite{Bubeck2012} setting, but to the best of our knowledge, there are no prior works giving a switching regret guarantee for it.

One of the primary motivations for studying these bandit problems comes from the domain of recommender systems. Many web tasks such as ad serving and recommendations in e-commerce systems can be modeled as bandit problems. In these problems, the system only gets feedback for the actions chosen, for example whether the user selects the recommended items or not. Notice that these systems may recommend one or more items in each round. Motivation for using the paradigm of trend detection comes from the general observation that in many cases, the performance of actions follow a trend structure.
In the abovementioned case of an apparel store, for example, swimsuits might be the best choice during the hottest weeks of the year, or for certain time periods, it might be best to show an item a famous celebrity was recently seen wearing.

\textbf{Summary of Contribution:} For the standard $K$-armed bandit problem, we propose a new algorithm called Exp3.T. This algorithm guarantees switching regret of $\tilde{O} \left( \frac{ N \sqrt{ TK}}{\Delta_{sp}} \right) $ where $N$ is the number of trend changes and not known to the learner. $\Delta_{sp}$ indicates the degree of structure in loss model. This guarantee also holds for the anytime setting i.e. when the duration of the run, $T$, is not known in advance. We extend the analysis of this problem to the case when instead of a single action, the learner chooses a basis of uniform matroid in each round. The underlying regret minimization algorithm used in this case is OSMD \cite{Bubeck2012}. The resulting algorithm achieves switching regret of $\tilde{O} \left( \frac{ Nm \sqrt{ TK}}{\Delta_{sp}} \right) $. Finally, we provide empirical evidence for this algorithm's performance in the standard multi-armed bandit setting.

In general, our algorithm is particularly effective, i.e. gives better regret guarantees when little is known about the loss structure of actions except that the changes in the best action are not too frequent and actions are likely to be well-distinguishable. We argue that our loss models are more general and reasonable compared to the models conventionally studied: In most real world cases, we would expect to see a mixture of purely stochastic and purely adversarial data. We show that even such mixture of models allows us to give tight regret guarantees as long as the structural assumptions still hold.

\section{Previous Work}

The problem of giving regret guarantees with respect to a switching strategy has been considered previously in several works (albeit in more restricted settings), all of which consider the case when the learner chooses exactly one action in each round. Auer et al proposed Exp3.S \cite{Auer2002} along the same lines as Exp3 by choosing an appropriate regularization factor for the forecaster. This enables the algorithm to quickly shift focus on to better performing actions. For abruptly changing stochastic model, Discounted-UCB\cite{DUCB} and SW-UCB \cite{Garivier} have been proposed along the lines of UCB. In the former algorithm, a switching regret bound is achieved by progressively giving less importance to old losses while in SW-UCB, authors achieve the same by considering a fixed size sliding window. Both these algorithms achieve a regret bound of $O(\sqrt{MT \log T})$, where $M$ is the number of times the distribution changes.

Our work is closest to the algorithm Exp3.R proposed by Feraud et al \cite{Exp3R}. They also follow a paradigm very similar to trend detection and the high level ideas used in their paper are similar to ours. However, their algorithm is specific to Exp3 and only for the version of bandit problem where one chooses a single action in each round. Further, the algorithm assumes a certain gap in the performance of actions that depends on the knowledge of run time of the algorithm. This makes it inapplicable for a number of real-world scenarios.

The trend detection idea used in our algorithm is similar to the change detection problem studied in statistical analysis. Similar ideas have also been used for detection of concept drift in online classification \cite{cd1, cd2}. Common applications include fraud detection, weather prediction and in advertising. In this context, the statistical properties of target variable changes over time and the system tries to detect this change and learn the new parameters.

\section{Problem Setting}

We consider a multi-armed bandit problem with losses for $K$ distinct actions. Let the set of these $K$ actions be denoted by $[K]$. The losses of these $K$ actions can be represented by a sequence of loss vectors $\{ \textbf{x} \}_{t}$ where $x = \{ ( x_1, x_2 \cdots x_K )\}_{t}$. The loss sequence is divided into $N$ \textit{trends}. A trend is defined as a sequence of rounds where a set $S$ of $m$ actions is \textit{significantly} better than others for the duration of this trend. We say that the trend has changed when this set of actions changes. Within each trend the losses of actions in set $S$ are ``separated" from all others by a certain gap. Particularly, we consider a finer characterization of loss models than just stochastic or adversarial within a trend. Similar to the loss model introduced by Seldin et al \cite{Seldin}, we focus on models exhibiting a ``gap" in losses. Although this model is weaker than the adversarial model it still covers a large class of possible loss models. We express the gap in our loss models by an abstract term $\Delta_{sp}$, the separation parameter. Although the exact definition of this parameter changes depending on the actual model, in each case it conveys the same idea that a larger value of this parameter implies a larger gap between losses of actions in set $S$ and every other action.

\begin{enumerate}
\item \textbf{Dynamic Stochastic Regime (DSR)}: For the stochastic loss model, the loss of each action $a$ at round $t$ is drawn from an unknown distribution with mean $\mu_t^a$. Let $a^*$ and $a$ be any actions in sets $S$ and $[K] - S$ respectively. Then for all rounds $t$ in trend $\tau$, $ \mu_t^{a^*} <  \mu_t^{a} $ and the separation parameter is defined as:
\[
	\Delta_{sp} (\tau) = \min\limits_{t \in \tau} \{ \mu_t^a - \mu_t^{a^*} \}.
\]
The loss model is stochastic with separation parameter $\Delta_{sp}$, when $\Delta_{sp} = \min\limits_{\tau} \Delta_{sp}(\tau)  > 0$. The identity of best action $a^*$ changes $N$ times.
\vspace{2mm}
\item \textbf{Adversarial Regime with Gap (ARG)}: We use a modified version of the loss model introduced in \cite{Seldin}. Within each trend $\tau$, there exists a set $S$ of $m$  actions which is the best set for any interval of (sufficiently large) constant size, $C$. More precisely, let $\lambda_z(a) = \sum\limits_{t\in z} \ell_{a,t}$ be the cumulative loss of an action $a$ in interval $z$ consisting of $C$ rounds. Then for any action $a^* \in S$ and $a \in [K] - S$ we define the separation parameter for trend $\tau$ as:
\[
	\Delta_{sp}(\tau) = \min\limits_{z \in \tau} \left\lbrace \frac{\min\limits_{a' \neq a^*} \lambda_z(a') - \lambda_z(a^*)}{|z|} \right\rbrace
\]

It is the smallest average gap between any sub-optimal action and any action in set $S$ for any interval $z$ of size $C$. As in the above model, we say that a model satisfies ARG property with separation parameter $\Delta_{sp}$ when  $\Delta_{sp} = \min\limits_{\tau} \Delta_{sp}(\tau)  > 0$.

\end{enumerate}

Notice that the first trend, spanning from the first round till some round $n$, each action satisfies the gap conditions defined above for all the constituent rounds (\textsc{DSR}) or intervals of size $C$ (ARG), for the respective setting. We define $n$ to be the last such round, i.e. these conditions are violated at round $n+1$, indicating the start of a new trend.

We study two variants of this problem. In the first variant, the algorithm chooses exactly one action every round while in the other, the algorithm can choose any set of $m$ actions. For both the variants, the algorithm observes losses only of the actions chosen (or the single action chosen for the former variant). We assume the presence of an oblivious adversary which decides on the exact loss sequences before the start of the game. The sequence is of course not known to the algorithm. We also make the standard assumption that losses are bounded in the $[0, 1]$ interval.

For the problem setting as described, our goal is to design an algorithm $\mathcal{A}$ to minimize the cumulative loss incurred in the $T$ rounds that the game is played.  For the case when the algorithm chooses exactly one action every round, its performance is measured with respect to a strategy that chooses the best action in each trend. Specifically, let $I_t$ denote the action chosen by the algorithm in round $t$ and let $X_{I_t}^t$ denote the corresponding loss incurred by this action. Then the cumulative loss incurred by the algorithm is:
\[
	L_{\mathcal{A}} = \sum\limits_{t=1}^T X_{I_t}^t .
\] 
Let $I^*_{[n]}$ be the best action in trend $n$, then the loss incurred by the switching strategy described above is:
\[
	L^* = \sum\limits_{n=1}^N \sum\limits_{t=T_n}^{T_{n+1} -1} X_{I^*_{[n]}}^t ,
\]
where trend $n$ occurs in the interval $[T_n, T_{n+1} -1]$. We define regret incurred by algorithm $\mathcal{A}$ as follows:
\[
	R_T^* = L_{\mathcal{A}} - L^*.
\]

Exactly analogous definitions apply to the case when the algorithm chooses multiple actions in each round.

\underline{Assumption:} For the algorithm considered in this paper, we assume that the loss model, either stochastic or adversarial regime with gap, has separation parameter lower bounded by $4 \Delta$, a constant known to us i.e. $\Delta_{sp} \geq 4 \Delta$.

\section{The Algorithm}

The algorithm Exp3.T is composed of two primary ideas: The Exp3 algorithm and a trend detection routine. Exp3 gives almost optimal regret bound with respect to the single best action in hindsight when the loss model is adversarial. However, when the losses exhibit certain structure or when regret with respect to a stronger benchmark is desired, Exp3 proves to be insufficient. In this algorithm, we overcome this problem by identifying \textit{trends} in losses and resetting the Exp3 algorithm whenever a change in trend is detected. One advantage of using Exp3 when losses exhibit trend structure is that Exp3 is robust to changes in the losses of actions as long as the best action remains same. We exploit this property in our algorithm so that it is applicable to a large class of loss models. In the analysis we use the following regret bound given by \cite{BubeckBook}

\begin{lemma}
\label{base2}
For any non-increasing sequence $ \{ \eta \}_{t \in \mathbb{N}}$, the regret of Exp3 algorithm with $K$ actions satisfies
\[
R_T \leq \frac{K}{2} \sum\limits_{t=1}^T \eta_t + \frac{\ln K}{\eta_T} .
\]
\end{lemma}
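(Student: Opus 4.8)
The plan is to reproduce the standard exponential-weights analysis of Exp3, keeping the learning rate time-dependent throughout. Recall that at round $t$ the algorithm samples arm $I_t$ from the distribution $p_{t,i} \propto \exp\big(-\eta_t\,\tilde L_{t-1,i}\big)$, where $\tilde L_{t-1,i}=\sum_{s<t}\tilde\ell_{s,i}$ and $\tilde\ell_{s,i}=\frac{\ell_{s,i}}{p_{s,i}}\,\mathbb{1}[I_s=i]$ is the importance-weighted loss estimate. These estimates are conditionally unbiased, $\mathbb{E}[\tilde\ell_{t,i}\mid\mathcal{F}_{t-1}]=\ell_{t,i}$, so the first step is to reduce the claim to a deterministic, sample-path bound on the \emph{estimated} regret $\sum_{t}\langle p_t,\tilde\ell_t\rangle-\tilde L_{T,j}$ against an arbitrary fixed arm $j$, and only pass to expectations at the very end.

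For the deterministic bound I would introduce the log-partition potential
\[
\Phi_t(\eta)=\frac{1}{\eta}\,\ln\!\Big(\tfrac{1}{K}\sum_{i=1}^{K}e^{-\eta\,\tilde L_{t,i}}\Big),
\]
which satisfies $\Phi_0(\eta)=0$ and $\Phi_T(\eta)\ge -\tilde L_{T,j}-\frac{\ln K}{\eta}$ for every arm $j$. The heart of the argument is a per-round estimate obtained by expanding the ratio $\sum_i e^{-\eta_t\tilde L_{t,i}}/\sum_i e^{-\eta_t\tilde L_{t-1,i}}$ and applying the elementary inequality $e^{-x}\le 1-x+\tfrac{x^2}{2}$, valid because the estimated losses $\eta_t\tilde\ell_{t,i}$ are non-negative. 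This yields, for each round, an upper bound of the form $\langle p_t,\tilde\ell_t\rangle$ minus a potential increment plus the quadratic ``variance'' term $\frac{\eta_t}{2}\sum_i p_{t,i}\tilde\ell_{t,i}^2$.

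The main obstacle is the time-varying learning rate: with $\eta_t$ changing from round to round, the potential increments no longer telescope cleanly, since $\Phi_t$ is evaluated at a different $\eta$ at each step. I would resolve this using the monotonicity of $\eta\mapsto\Phi_t(\eta)$, which is non-decreasing as it interpolates between the mean and the minimum of the $\tilde L_{t,i}$. Because the sequence $\{\eta_t\}$ is non-increasing, this monotonicity lets me replace the telescoped potential by its value at the smallest rate $\eta_T$, so that the accumulated regularization cost is bounded by $\frac{\ln K}{\eta_T}$ rather than by any larger penalty. Summing the per-round inequalities then gives the deterministic estimate $\sum_t\langle p_t,\tilde\ell_t\rangle-\tilde L_{T,j}\le \frac{\ln K}{\eta_T}+\frac12\sum_t\eta_t\sum_i p_{t,i}\tilde\ell_{t,i}^2$.

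Finally I would take expectations. Unbiasedness gives $\mathbb{E}[\langle p_t,\tilde\ell_t\rangle]=\mathbb{E}[\langle p_t,\ell_t\rangle]$ and $\mathbb{E}[\tilde L_{T,j}]=\sum_t\ell_{t,j}$, so the left-hand side becomes the true regret $R_T$. For the variance term, conditioning on $\mathcal{F}_{t-1}$ and using $\mathbb{E}[\tilde\ell_{t,i}^2\mid\mathcal{F}_{t-1}]=\ell_{t,i}^2/p_{t,i}$ collapses the inner sum to $\sum_i\ell_{t,i}^2\le K$, since each loss lies in $[0,1]$. Substituting $\mathbb{E}[\sum_i p_{t,i}\tilde\ell_{t,i}^2]\le K$ into the deterministic bound and choosing $j$ to be the best arm yields exactly $R_T\le\frac{K}{2}\sum_{t=1}^T\eta_t+\frac{\ln K}{\eta_T}$.
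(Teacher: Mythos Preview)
The paper does not actually prove this lemma; it merely quotes the bound and attributes it to \cite{BubeckBook} (Bubeck and Cesa--Bianchi). Your proposal is a correct and complete sketch of precisely the argument given in that reference: the log-sum-exp potential, the per-round inequality via $e^{-x}\le 1-x+\tfrac{x^2}{2}$ for $x\ge 0$, the monotonicity of $\eta\mapsto\Phi_t(\eta)$ to absorb the cross terms arising from the non-increasing $\{\eta_t\}$ into the single penalty $\frac{\ln K}{\eta_T}$, and the second-moment identity $\mathbb{E}\bigl[\sum_i p_{t,i}\tilde\ell_{t,i}^{\,2}\mid\mathcal{F}_{t-1}\bigr]=\sum_i\ell_{t,i}^2\le K$. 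So there is nothing to compare beyond noting that you have supplied the proof the paper only cites.
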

Algorithm~\ref{Exp3.T} shows the skeleton of the procedure to achieve the desired switching regret bound. At a high level, the algorithm divides the total run into runs on smaller intervals. Within each interval the algorithm runs Exp3 (parameter $\eta$) with loss monitoring(LM) plays randomly interspersed among all rounds. The length of this interval is controlled by parameter $\gamma$. These loss monitoring plays choose different actions for a fixed number of rounds without regards to regret. The loss values collected from this process are used to give an estimation of the mean loss of each action in a given interval. The number of such plays required to give a good estimation of loss depends on the actual model under consideration and is captured by the parameter $t^*$. Based on this estimation, the trend detection module outputs with probability at least $1 - \delta$ whether the best action has changed or not, alternatively whether the trend has changed or not. 

The $Make\_Schedule(\cdot)$ procedure assigns Exp3 plays and fixed action plays to monitor loss (exactly $t^*$ many per action) randomly to rounds at the start of an interval and returns the randomly generated schedule. The random generation of schedule protects the algorithm from making biased estimates of actual losses.

\begin{algorithm}
  \caption{Exp3.T}
  \label{Exp3.T}
  
  \begin{algorithmic}[1]
   \LineComment { Parameters: $\delta$, $\gamma$ and $\eta$}
   
	\State Set interval length $ |I| = \frac{K t^*}{\gamma}$
	\For{each interval $I$}
		\State Schedule $\leftarrow$ Make\_Schedule($I$)  
		\For{$t = 1, 2 \cdots |I|$}
			\If{Schedule($t$) = Exp3 Play}
				\State Call Exp3\_play()
			\Else
				\State Call LM\_play(Schedule($t$))
			\EndIf
		\EndFor
		
		\If{trendDetection() == True}
			\State Restart Exp3
		\EndIf
	\EndFor
  \end{algorithmic}
\end{algorithm}

\subsubsection*{ Trend Detection}

In any interval, the loss monitoring component of Algorithm~\ref{Exp3.T} chooses each action a sufficient number of times and these choices are randomly distributed over the interval. The samples obtained from these plays are used to give a bound on the deviation of the empirical mean of losses from the true mean. Particularly, we use the following lemma by Hoeffding~\cite{Hoeffding} for sampling without replacement from a finite population.

\begin{lemma}
\label{base}
Let $\mathcal{X} = (x_1, x_2, \cdots x_N)$ be a finite population of $N$ real points, $X_1, X_2 \cdots X_n$ denote random sample without replacement from $\mathcal{X}$. Then, for all $\epsilon > 0$,
\begin{equation*}
\mathbb{P} \left( \frac{1}{n} \sum\limits_{i=1}^n X_i - \mu  \geq \epsilon \right) \leq exp (-2n \epsilon^2) 
\end{equation*}
where $\mu = \frac{1}{N} \sum\limits_{i=1}^N x_i$ is the mean of $\mathcal{X}$.
\end{lemma}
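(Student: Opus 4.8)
The plan is to prove this via the exponential moment (Chernoff) method, with the one genuinely non-trivial ingredient being a reduction from sampling \emph{without} replacement to sampling \emph{with} replacement. Throughout I assume, consistent with the bounded-loss setting of the paper, that each $x_i \in [0,1]$; this is needed, since the stated bound $\exp(-2n\epsilon^2)$ is not scale-invariant and fails for arbitrary real points. First I would rewrite the event in terms of the centered sum $S = \sum_{i=1}^n (X_i - \mu)$, so that the target probability is $\mathbb{P}(S \geq n\epsilon)$, and apply Markov's inequality to the exponentiated variable: for every $s > 0$,
\[
\mathbb{P}(S \geq n\epsilon) \leq e^{-sn\epsilon}\, \mathbb{E}\!\left[e^{sS}\right].
\]

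The crux is to control $\mathbb{E}[e^{sS}]$ despite the fact that the samples $X_1,\dots,X_n$ are dependent, owing to the without-replacement constraint. Here I would invoke Hoeffding's comparison inequality: if $Y_1,\dots,Y_n$ is an i.i.d.\ sample drawn \emph{with} replacement from the same population, then for every continuous convex function $f$,
\[
\mathbb{E}\!\left[ f\!\left(\sum_{i=1}^n X_i\right)\right] \leq \mathbb{E}\!\left[ f\!\left(\sum_{i=1}^n Y_i\right)\right].
\]
The standard route to this comparison is to represent the without-replacement sum, via a uniformly random permutation of the population, as a conditional expectation of the with-replacement sum over an appropriate averaging, and then apply conditional Jensen's inequality; establishing this domination cleanly is the step I expect to be the main obstacle, as everything else is routine. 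Applying it with the convex map $f(z) = e^{sz}$ and using independence of the $Y_i$ gives $\mathbb{E}[e^{sS}] \leq \big(\mathbb{E}[e^{s(Y_1 - \mu)}]\big)^n$.

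It then remains to bound the single-draw factor. Since $Y_1 - \mu$ is a mean-zero variable taking values in an interval of length at most $1$ (because $Y_1 \in [0,1]$), Hoeffding's lemma yields $\mathbb{E}[e^{s(Y_1 - \mu)}] \leq e^{s^2/8}$. Combining the three displays gives
\[
\mathbb{P}(S \geq n\epsilon) \leq \exp\!\left(n\!\left(\tfrac{s^2}{8} - s\epsilon\right)\right).
\]
Finally I would optimize the free parameter by minimizing $s^2/8 - s\epsilon$ over $s>0$, which is achieved at $s = 4\epsilon$ and gives the per-term exponent $-2\epsilon^2$; substituting back produces the claimed bound $\mathbb{P}\!\left(\frac1n\sum_{i=1}^n X_i - \mu \geq \epsilon\right) \leq \exp(-2n\epsilon^2)$.
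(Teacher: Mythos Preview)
Your argument is correct and is essentially Hoeffding's original proof: Chernoff's method combined with the convex-domination comparison between sampling without and with replacement, followed by Hoeffding's lemma on a single bounded summand. You are also right to flag the implicit assumption $x_i\in[0,1]$, without which the constant $2$ in the exponent would be wrong.

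There is nothing to compare against, however: the paper does not prove this lemma at all. It is quoted verbatim as a known result, attributed to Hoeffding via a citation, and then used as a black box inside the proofs of Lemmas~\ref{lem1} and~\ref{lem2}. So your proposal goes beyond what the paper does; it reconstructs the standard argument that the cited reference contains.
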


For each interval we maintain information about the empirical mean of losses for each action, i.e. mean over loss values actually seen by the algorithm. By Lemma~\ref{base}, all of these estimates are close to the actual mean with probability at least $1 - \delta$ where $\delta$ is a parameter of the algorithm. In case of change in trend within an interval $I$, naturally these guarantees are void as the losses do not maintain a uniform pattern. Therefore, a change in trend can be detected by comparing the empirical estimates obtained at the end of the next interval to those obtained prior to the trend change. This idea is represented in Algorithm~\ref{trend}.

\begin{algorithm}
  \caption{trendDetection()}
  \label{trend}
  
  \begin{algorithmic}[1]
   
   	\State Let $p$ be the index of the current interval
   	\State $I_p^* \leftarrow$ action with minimum empirical mean loss, $\hat{\mu}$, in interval $p$.
   	\If{$p = 1$ or $p = 2$}
   		\State return False
   	\EndIf
   	\If{$I_p^* \neq I_{p-2}^*$}
   		\State return True
   	\EndIf
   		\State return False
  \end{algorithmic}
\end{algorithm}

\section{Regret Analysis}
\label{analysis}

For ease of notation in the analysis, we define the \textit{detector complexity}, $t^*$, as the number of loss monitoring samples required for each action so that the trend detection procedure works with probability at least $1 - \delta$, provided there is no trend change in the actual interval. In what follows, we give detector complexity bounds for different models and in regret computation use $t^*$ as an abstract parameter.

\begin{lemma}
\label{lem1}
The detector complexity in dynamic stochastic regime satisfies
\[
	t^*_{DSR} = \frac{1}{2 \Delta^2} \ln \left( \frac{4K}{\delta} \right).
\]
\end{lemma}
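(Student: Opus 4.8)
The plan is to determine the number of samples $t^*$ per action needed so that the empirically best action in an interval matches the truly best action (i.e. some $a^* \in S$) with probability at least $1-\delta$, under the DSR model where action $a$ at round $t$ has mean loss $\mu_t^a$ and the separation gap is at least $4\Delta$. The key observation is that trend detection compares the empirical-minimum-loss action across intervals, so the procedure succeeds as long as, within each interval, every empirical mean $\hat\mu_a$ stays within $\Delta$ of the true mean of that action over the interval. If all empirical means are $\Delta$-accurate, then the true gap of $4\Delta$ (or at least the $\Delta_{sp}\geq 4\Delta$ separation) between an action in $S$ and any action outside $S$ cannot be reversed by empirical error, so the empirical minimizer will lie in $S$, and the identity of the ``best action'' is correctly recovered.

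First I would fix an interval with no trend change and fix one action $a$. Its $t^*$ loss-monitoring samples are drawn (via the randomized schedule) as a sample without replacement from the finite population of per-round losses of $a$ across the interval, so Lemma~\ref{base} applies with $n = t^*$ and $\mu$ equal to the interval-average true mean of $a$. Setting $\epsilon = \Delta$ gives a one-sided deviation bound of $\exp(-2 t^* \Delta^2)$; applying the same bound to the lower tail (by symmetry, running the lemma on negated losses) doubles this to $2\exp(-2t^*\Delta^2)$ for a two-sided guarantee on a single action. Next I would take a union bound over all $K$ actions, yielding a total failure probability of at most $2K\exp(-2t^*\Delta^2)$. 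Setting this quantity to be at most $\delta$ and solving for $t^*$ gives
\[
  t^* \;\geq\; \frac{1}{2\Delta^2}\ln\!\left(\frac{2K}{\delta}\right),
\]
which is the claimed bound up to the constant inside the logarithm; the paper's statement uses $4K/\delta$, presumably absorbing an additional factor of two somewhere (for instance from wanting the guarantee to hold jointly across the two compared intervals, or from a looser union-bound accounting). I would adopt the $4K/\delta$ form to match the statement.

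The final step is to argue that $\Delta$-accuracy of all empirical means actually implies correct trend detection, which is where the separation assumption $\Delta_{sp}\geq 4\Delta$ is used. If $a^*\in S$ and $a\notin S$, the true interval-average loss of $a$ exceeds that of $a^*$ by at least $\Delta_{sp}\geq 4\Delta$ (under DSR the per-round gap is at least $4\Delta$, so it persists after averaging over the interval). Since both empirical means are within $\Delta$ of their truths, the empirical gap is at least $4\Delta - 2\Delta = 2\Delta > 0$, so $\hat\mu_{a^*} < \hat\mu_a$ and the empirical minimizer lies in $S$. This ensures $I_p^*$ is a genuine member of the best set in every interval, so trendDetection() correctly reports ``no change'' when the trend is stable.

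I expect the main obstacle to be a careful justification that Lemma~\ref{base} genuinely applies to the monitoring samples as sampling without replacement from the fixed finite population of interval losses. This requires that the randomized $Make\_Schedule(\cdot)$ procedure selects the $t^*$ monitoring rounds for action $a$ uniformly at random from the interval's rounds, independently of the (oblivious, pre-committed) loss values, so that the observed losses are an unbiased without-replacement subsample. The oblivious-adversary assumption stated in the problem setting is exactly what makes this clean, since the loss population is fixed before the schedule is drawn; I would note this explicitly and then the rest is the routine deviation-and-union-bound calculation sketched above.
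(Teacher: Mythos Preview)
Your argument has a genuine gap stemming from conflating two different ``means.'' In the DSR model the losses are themselves random: round $t$ produces a realized loss $X_t^a$ with expectation $\mu_t^a$, and the separation condition $\Delta_{sp}\ge 4\Delta$ is stated in terms of the \emph{expected} means $\mu_t^a$, not the realizations. When you apply Lemma~\ref{base}, the finite population you sample from is the set of realized losses $\{X_t^a\}_{t\in I}$, so the $\mu$ appearing in that lemma is the realized-loss average $\bar\mu_{a,I}=\frac{1}{|I|}\sum_{t\in I}X_t^a$, not the expected-loss average $\mu_{a,I}=\frac{1}{|I|}\sum_{t\in I}\mu_t^a$. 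Your deviation bound therefore only controls $|\hat\mu_a-\bar\mu_{a,I}|$, yet your final gap argument (``$4\Delta-2\Delta=2\Delta>0$'') uses the separation between the $\mu_{a,I}$'s. There is no link in your proof between $\bar\mu_{a,I}$ and $\mu_{a,I}$.

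This missing link is exactly the paper's Step~1: an ordinary Hoeffding bound on the $|I|$ independent realizations shows $|\mu_{a,I}-\bar\mu_{a,I}|\le\Delta$ with failure probability at most $\delta/(2K)$. Combined with your Step (the paper's Step~2), one gets $|\hat\mu_a-\mu_{a,I}|\le 2\Delta$ with per-action failure probability $\delta/K$, and the $4\Delta$ gap then survives as $4\Delta-2\cdot 2\Delta=0^+$. The two concentration steps, each costing a factor $2$, are precisely the source of the $4K/\delta$ inside the logarithm; your guesses (two compared intervals, looser accounting) are not the reason. With Step~1 inserted, the rest of your outline is fine.
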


\begin{proof}
Fix an action $a$ and an interval $I$. Let the expected reward of action $a$ on interval $I$ be given by the sequence $ \{ \mu_t^a \}_{t \in I} $ and the actual realization of rewards be given by $ \{ X_t^a \}_{t \in I} $. First we observe that the expected reward of $a$ over the interval $I$ is given by
\begin{equation*}
\mu_{a, I} = \frac{\sum_{t \in I} \mu_t^a}{|I|}.
\end{equation*}

Let the set of loss monitoring samples collected by our algorithm for action $a$ be denoted by $\mathcal{Z}_a$. The algorithm uses these samples to calculate the empirical mean of rewards for the action $a$. We denote it by $ \hat{\mu}_{\mathcal{Z}_a}$. 

\textit{Step 1:} First we show that the empirical mean of losses over the entire interval is close to the expected mean, $\mu_{a, I}$.
Let $ \{ X_t^a \}_{t \in I} $ be the sequence of actual reward realizations for arm $a$ in interval $I$. Denote by $\bar{\mu}_{a, I}$ the mean of these actual realizations. Applying Hoeffding's inequality,
\begin{equation*}
\begin{aligned}
P ( | \mu_{a, I} - \bar{\mu}_{a, I} | >  \Delta ) & \leq 2 \exp(- 2|I| \cdot  {\Delta}^2)\\
		& \leq 2 \exp(- 2 t^*_{DSR} \cdot  {\Delta}^2) = \frac{\delta}{2K}
\end{aligned}
\end{equation*}
i.e. the empirical mean of losses for action $a$ over the interval $I$ is close to the actual mean with probability at least $ 1 - \frac{\delta}{2K}$.

\textit{Step 2:} Now we show that the empirical mean of loss-monitoring samples collected for action $a$ is close to the mean of the actual realizations, $\bar{\mu}_{a, I}$. This follows from Lemma~\ref{base}:
\begin{equation*}
P( |\bar{\mu}_{a, I} - \hat{\mu}_{\mathcal{Z}_a} | > \Delta ) \leq 2 \exp( -2 t^*_{DSR} \Delta^2 ) = \frac{\delta}{2K}
\end{equation*}

Therefore, with probability at least $1 - \frac{\delta}{K}$ the mean of loss monitoring samples for any action is within $2 \Delta$ of the actual mean. By applying a union bound over all actions, with probability at least $1 - \delta$ the same guarantee holds over all actions, which in turn implies that the trend detection module can detect whether the best action has changed with the same probability.
\end{proof}

\begin{lemma}
\label{lem2}
The detector complexity in the adversarial regime with gap satisfies
\[
	t^*_{ARG} \geq \frac{(b - a)^2}{8 \Delta^2} \ln \left( \frac{2K}{\delta} \right)
\]
when the losses in the given trend are drawn from interval $[a, b]$.

\end{lemma}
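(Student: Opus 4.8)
The plan is to mirror the argument for Lemma~\ref{lem1}, but to exploit the fact that in the adversarial regime the per-round losses $\ell_{a,t}$ are fixed numbers rather than draws from a distribution. Consequently there is no ``realization versus expectation'' gap to control, and a single application of the sampling-without-replacement bound (Lemma~\ref{base}) suffices in place of the two-step argument used in the stochastic case. The only randomness left to tame is that of \emph{which} rounds the loss-monitoring plays happen to sample.

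First I would fix an arbitrary action $j$ and an interval $I$ in which no trend change occurs, and denote by $\mu_{j,I} = \frac{1}{|I|}\sum_{t \in I}\ell_{j,t}$ the true average loss of $j$ over $I$. The loss-monitoring plays select $t^*_{ARG}$ of the $|I|$ rounds uniformly at random without replacement, so the collected values form a random sample without replacement from the finite population $\{\ell_{j,t}\}_{t \in I}$, whose empirical mean is precisely $\hat{\mu}_{\mathcal{Z}_j}$. Since these losses lie in $[a,b]$ rather than in $[0,1]$, I would rescale Hoeffding's inequality: applying Lemma~\ref{base} to the normalized points $(\ell_{j,t}-a)/(b-a) \in [0,1]$ and accounting for both deviation directions yields
\[
P\!\left(|\hat{\mu}_{\mathcal{Z}_j} - \mu_{j,I}| > 2\Delta\right) \leq 2\exp\!\left(-\frac{2\,t^*_{ARG}\,(2\Delta)^2}{(b-a)^2}\right).
\]
Setting the right-hand side equal to $\delta/K$ and solving for $t^*_{ARG}$ produces exactly the claimed bound $\frac{(b-a)^2}{8\Delta^2}\ln(2K/\delta)$. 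This is the one routine computation, and it also explains the shape of the constants: the factor $(b-a)^2$ comes from the rescaling, while the logarithm carries $2K/\delta$ rather than the $4K/\delta$ of Lemma~\ref{lem1} because here only a single event per action must be controlled instead of two.

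It then remains to argue that precision $2\Delta$ is enough to detect the best action, and this is the part that must be matched up carefully. A union bound over all $K$ actions shows that, with probability at least $1-\delta$, every empirical mean lies within $2\Delta$ of its true interval mean. I would transfer the per-block gap of the ARG definition to the whole interval $I$ by summing $\lambda_z(a') - \lambda_z(a^*) \geq \Delta_{sp}\,|z|$ (valid for the best action $a^* \in S$ and any $a' \notin S$) over the size-$C$ blocks $z$ comprising $I$, which gives that $a^*$ has average loss at least $\Delta_{sp} \geq 4\Delta$ below every other action over $I$. Combining this $4\Delta$ separation with the uniform $2\Delta$ accuracy of the estimates, the best action retains the strictly smallest empirical mean, so $I_p^*$ coincides with the true best action and the comparison in Algorithm~\ref{trend} reports no spurious trend change. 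The main obstacle is precisely this matching of quantities: one must choose the estimation tolerance as $2\Delta$ (not $\Delta$), so that a gap of $4\Delta$ still dominates the combined error on both the optimal and a competing action, while keeping the sample size no larger than the stated bound.
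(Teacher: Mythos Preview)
Your proposal is correct and follows essentially the same approach as the paper: skip Step~1 of Lemma~\ref{lem1} because the losses are deterministic, apply the (range-scaled) sampling-without-replacement Hoeffding bound once with tolerance $2\Delta$ rather than $\Delta$, and union-bound over the $K$ actions. You have in fact written out more detail than the paper does---in particular the explicit verification that a $4\Delta$ gap survives $2\Delta$ estimation error on each side---but the argument is the same.
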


\begin{proof}
The proof for this Lemma goes along the same lines as for Lemma~\ref{lem1} except that in this case we do not need step 1. Further, in this case, we can allow the empirical mean of collected samples to be within $2 \Delta$ of the actual mean of all losses in the interval instead of just $\Delta$. For this particular loss model, if additional information about the range of losses within a trend is available, then using the generalized version of Hoeffding's inequality we achieve a tighter detector complexity bound. We note if not defined otherwise, our losses are always drawn from range $[0,1]$.

\end{proof}

In the rest of the analysis, instead of  $t^*_{DSR}$ or $t^*_{ARG}$ we use the model-oblivious-parameter $t^*$.

\begin{theorem}
\label{main1}
The expected regret of Exp3.T is 
\begin{equation*}
R_T = O \left( \frac{ N \sqrt{( TK \ln K) \ln \left( TK \ln K \right)} }{\Delta_{sp}} \right).
\end{equation*}
\end{theorem}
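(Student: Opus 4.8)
The plan is to decompose the switching regret $R_T^*$ into four contributions and bound each separately: the loss incurred on loss-monitoring (LM) rounds, the regret of the Exp3 subroutine during the ``clean'' stretches of each trend where it is correctly tracking that trend's best action, the regret accumulated during the delay between a trend change and its detection, and a low-probability term accounting for erroneous detections. Writing the run as a sequence of $J = T/|I| = T\gamma/(Kt^*)$ intervals, I would first note that exactly $Kt^*$ of the rounds in each interval are LM plays, so the LM rounds number $\gamma T$ in total and, since each costs at most $1$ relative to the benchmark, contribute at most $O(\gamma T)$ to the regret.

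Next I would establish that detection is reliable. By Lemmas~\ref{lem1} and~\ref{lem2}, within any interval lying entirely inside a single trend the empirical loss means are within $2\Delta$ of the true means with probability at least $1-\delta$; combined with the gap assumption $\Delta_{sp}\ge 4\Delta$, this guarantees that $I_p^*$ equals the true best action of that trend, so \texttt{trendDetection()} returns the correct verdict. A union bound over all $J$ intervals shows every detection is correct with probability at least $1 - J\delta$, and I would choose $\delta$ small enough (of order $1/(T\ln K)$) that the failure event contributes only a negligible additive term to the expected regret while keeping $t^* = \Theta(\Delta^{-2}\ln(TK\ln K))$.

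I would then bound the detection delay. The reason the routine compares interval $p$ with $p-2$ rather than $p-1$ is precisely to tolerate the single interval straddling a trend boundary, whose empirical means are unreliable: if the change falls in interval $p$, then interval $p+1$ is clean and is compared against the clean interval $p-1$, so the change is detected within $O(1)$ intervals. Hence per trend the delay, during which the still-stale Exp3 instance may pay up to full loss, is $O(|I|) = O(Kt^*/\gamma)$ rounds, for a total of $O(NKt^*/\gamma)$. For the clean stretches, after each restart Exp3 runs within a single trend, so Lemma~\ref{base2} with a constant rate $\eta \asymp \sqrt{\ln K/(KT)}$ gives regret $O(\sqrt{KT\ln K})$ against that trend's best action; summing over the $N$ trends yields $O(N\sqrt{KT\ln K})$, and because each restart aligns Exp3 with the benchmark action of the trend it sits in, these per-trend bounds telescope into a bound against the full switching strategy.

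Finally I would combine the pieces, $R_T^* = O(N\sqrt{KT\ln K} + \gamma T + NKt^*/\gamma)$, and optimize the interval granularity by balancing the LM and delay terms at $\gamma \asymp \sqrt{NKt^*/T}$, which turns them into $O(\sqrt{NKt^*T})$. Substituting $t^* = \Theta(\Delta^{-2}\ln(TK\ln K))$ and $\Delta = \Delta_{sp}/4$ makes this $O(\Delta_{sp}^{-1}\sqrt{N}\sqrt{KT\ln(TK\ln K)})$, which together with the Exp3 term is dominated by the stated bound $O(\Delta_{sp}^{-1} N\sqrt{(TK\ln K)\ln(TK\ln K)})$. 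The main obstacle I anticipate is the delay analysis of the third paragraph: one must argue simultaneously that detection is correct with high probability (via the Hoeffding bounds and the $4\Delta$ gap) and that it is timely despite the contaminated boundary interval, and then correctly charge the regret of the Exp3 instance that is running with parameters tuned to the previous trend during the delay window---getting the bookkeeping of restarts against the switching benchmark right is the delicate part, whereas the LM and Exp3 terms are routine.
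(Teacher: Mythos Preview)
Your decomposition into loss-monitoring cost, detection-delay cost, and Exp3 regret is exactly the paper's three-way split, and your per-term bounds agree with the paper's. Two differences are worth flagging. First, you handle erroneous detections by a union bound over all $J$ intervals and drive $\delta$ down to order $1/(T\ln K)$; the paper instead bounds the \emph{expected} number of false restarts by $F(T)\le \delta(T/|I|+1)$ and folds them into the restart count appearing in the Exp3 term, which lets it use the larger $\delta=\sqrt{K/(T\ln K)}$. Both routes give $t^*=\Theta(\Delta^{-2}\ln(TK\ln K))$, so this is a stylistic difference.

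Second, and this is a genuine issue: your optimized $\gamma\asymp\sqrt{NKt^*/T}$ depends on $N$, which the paper explicitly takes to be \emph{unknown} to the learner. The paper therefore fixes $\gamma=\sqrt{Kt^*\ln K/T}$ independently of $N$; with that choice the delay term is $O\bigl(N\sqrt{TKt^*/\ln K}\bigr)$ rather than your $O\bigl(\sqrt{NKt^*T}\bigr)$, but since the Exp3 restart term already carries a factor $N$, the final asymptotic bound is the same. Replace your $N$-dependent $\gamma$ with an $N$-free choice and the rest of your argument (including the $p$ versus $p-2$ delay reasoning, which the paper leaves implicit) goes through unchanged.
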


\begin{proof}
We divide the regret incurred by Exp3.T in three distinct components; first is the regret incurred just by running and restarting of Exp3. To bound this component of total regret we use the regret bound as in Lemma~\ref{base2}. Let $F(T)$ denote the number of \textit{false trend detections} i.e. number of times when there was no change in detection but the detection algorithm still indicated a change. Then the regret incurred due to Exp3 is
\[
R_{Exp3}  \leq \frac{K}{2} \sum\limits_{t=1}^T \eta_t + \frac{(N-1 + F(T)) \ln K}{\eta_T}.
\] 

As trend detection fails with probability at most $\delta$, the expected number of false detections is at most
\[
F(T) \leq \delta \left( \frac{T}{|I|} + 1 \right).
\]

The second component of the total regret incurred is on account of intervals wasted due to delay in detection of trend change. Specifically, if the trend changes in a given interval $I$, the regret guarantee obtained as part of Exp3 is not with respect to the best action before and after trend change. As we cannot give the required guarantee for this interval, we count this interval as \textit{wasted} and account it towards regret. Secondly, since the trend detection algorithm detects the change with probability at least $ 1 - \delta$, the expected number of trend detection calls required (or alternatively the expected number of intervals) is at most $ \frac{1}{1 - \delta}$. Therefore, the total number of wasted rounds is at most
\[
R_{wasted}  \leq N \left( 1 + \frac{1}{1 - \delta} \right) |I|
\]

The third and final component of regret incurred is due to the \textit{loss monitoring plays} in each interval. No guarantee can be given about the regret incurred in these rounds and hence all such rounds are also accounted in regret. Since in each interval there are exactly $K t^*$ number of such plays, the total number of such rounds is at most 
\[
R_{loss\_monitor}  \leq  K t^* \left( \frac{T}{|I|} + 1 \right) =  \gamma T + Kt^*
\]

Putting all together, the total regret is

\begin{equation*}
\begin{multlined}
R_T \leq K \sum\limits_{t=1}^T \eta_t + \frac{(N-1 + \frac{\gamma \delta T}{K t^*}) \ln K}{\eta_T} + \\
\shoveleft[1cm] N \left( 1 + \frac{1}{1 - \delta} \right) \frac{K t^*}{\gamma} + \gamma T + Kt^*
\end{multlined}
\end{equation*}

Setting $\eta = \sqrt{\frac{\ln K}{TK}}$, $\gamma = \sqrt{\frac{K t^* \ln K}{T}}$ and $\delta = \sqrt{\frac{K}{T \ln K}}$, regret incurred by Exp3.T is

\begin{equation*}
\begin{multlined}
\vspace{2mm}
R_T \leq  \sqrt{TK \ln K} + N \sqrt{TK \ln K} + \\ 
\vspace{2mm}
\shoveleft[2cm] \sqrt{\frac{TK \ln K}{t^*}} + 2N \sqrt{\frac{TK t^*}{\ln K}} + \\
 \shoveleft[2cm] 2N \frac{K \sqrt{t^*}}{\ln K} + \sqrt{t^* TK \ln K} + Kt^*
\end{multlined}
\end{equation*}
 where $t^* = O \left( \frac{\ln \left( TK \ln K \right)}{\Delta_{sp}^2} \right)$.

Alternatively, $R_T = O \left( \frac{ N \sqrt{( TK \ln K) \ln \left( TK \ln K \right)} }{\Delta_{sp}} \right)$.
\end{proof}

\subsubsection*{Extension to Anytime Version}

The parameters derived to achieve the desired regret bound in Theorem~\ref{main1} depend on the knowledge of T, the length of the total run of the algorithm. This dependency can be circumvented by using a standard doubling trick. Particularly, we can divide the total time into periods of increasing size and run the original algorithm on each period. Since the guarantee of this algorithm rests crucially on the probability of correct trend detection, in our case we need to modify the $\delta$ parameter as well. 
\begin{algorithm}
  \caption{Anytime Exp3.T}
  \label{AnytimeExp3.T}
  \begin{algorithmic}[1]
  	\LineComment { Choose an initial estimate $T'$ of length of run }
  	
  	\For{ $i = 0, 1, 2 \cdots $}
		\State Let $T_i = 2^i T'$
		\State Set $\gamma_i = \sqrt{\frac{K t^*_i \ln K}{T_i}}$, $\delta_i = \frac{1}{T_i^{3/2}}\sqrt{\frac{K}{ \ln K}}$
		\State Run Exp3.T with parameters $\gamma_i, \delta_i$ in period $T_i$
	\EndFor
  	
  \end{algorithmic}
\end{algorithm}

\begin{theorem}
\label{minor1}
The expected regret of Anytime Exp3.T with $\eta_{i} = \sqrt{\frac{\ln K}{T_i K}}$, $\gamma_i = \sqrt{\frac{K t^*_i \ln K}{T_i}}$ and $\delta_i = \frac{1}{T_i^{3/2}}\sqrt{\frac{K}{ \ln K}}$ is $O \left( \frac{ N \sqrt{( TK \ln K) \ln \left( TK \ln K \right)} }{\Delta_{sp}} \right)$.
\end{theorem}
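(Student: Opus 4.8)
The plan is to apply the standard doubling trick: treat each period $T_i = 2^i T'$ as an independent invocation of Exp3.T tuned to horizon $T_i$, bound its regret via Theorem~\ref{main1}, and sum over periods. First I would fix the number of periods. Since $\sum_{j=0}^{i} T_j = T'(2^{i+1}-1)$, the run terminates at the smallest index $I_{\max}$ with $T'(2^{I_{\max}+1}-1) \geq T$, so that $I_{\max} = O(\log(T/T'))$ and, crucially, the largest period satisfies $T_{I_{\max}} = \Theta(T)$. All other quantities in the final bound are monotone in $T_i$, so they can later be pulled out of the sum at their value on this last period.

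The second step is to confirm that Theorem~\ref{main1} still yields the per-period estimate $R_{T_i} = O\!\left(\frac{N_i\sqrt{(T_iK\ln K)\ln(T_iK\ln K)}}{\Delta_{sp}}\right)$ under the anytime parameters, where $N_i$ is the number of trends active in period $i$. The only parameter that differs from Theorem~\ref{main1} is $\delta_i = T_i^{-3/2}\sqrt{K/\ln K}$ in place of $T_i^{-1/2}\sqrt{K/\ln K}$. Because the detector complexity depends on $\delta$ only through $\ln(1/\delta)$ (Lemma~\ref{lem1}), and $\ln(1/\delta_i) = \Theta(\ln T_i) = O(\ln(T_iK\ln K))$, the value $t^*_i$ changes by at most a constant factor; moreover a smaller $\delta_i$ can only shrink the wasted-interval and false-detection contributions. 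Hence the per-period derivation of Theorem~\ref{main1} carries over verbatim up to constants.

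The third step is the summation. Writing $\sum_i N_i \leq N + (I_{\max}+1)$, where the extra term accounts for the forced restart at the start of each period on whichever trend is then active, I would split the total regret into a baseline contribution (one per period) and a genuine per-trend-change contribution. For the baseline, the factor $\sqrt{T_i} = \sqrt{T'}\,2^{i/2}$ forms a geometric series dominated by its last term, so $\sum_{i=0}^{I_{\max}}\sqrt{T_i} = O(\sqrt{T_{I_{\max}}}) = O(\sqrt{T})$; since $\ln(T_iK\ln K) \leq \ln(T_{I_{\max}}K\ln K) = O(\ln(TK\ln K))$ may be extracted at its maximum, the baseline sums to exactly the $N=1$ term $O\!\left(\frac{\sqrt{(TK\ln K)\ln(TK\ln K)}}{\Delta_{sp}}\right)$, \emph{without} an extra logarithmic factor. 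Each of the $N$ genuine trend changes contributes at most the largest per-period regret, giving the claimed $O\!\left(\frac{N\sqrt{(TK\ln K)\ln(TK\ln K)}}{\Delta_{sp}}\right)$.

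The main obstacle, and the reason the exponent of $\delta_i$ is raised to $3/2$, is to prevent false detections from accumulating a spurious $\log T$ factor over the growing number of periods. Using $|I_i| = \sqrt{K t^*_i T_i/\ln K}$, one computes $\delta_i\,(T_i/|I_i|) = 1/(T_i\sqrt{t^*_i})$, so the expected number of false detections over the entire run is $\sum_i F(T_i) \leq \sum_i T_i^{-1} = O(1/T') = O(1)$; with the original exponent this sum would instead be $\Theta(\log T)$ and would inflate $R_{Exp3}$. I would therefore verify this estimate explicitly and check that the at most $I_{\max}+1$ boundary restarts are already absorbed into the baseline term, which completes the bound and shows the anytime algorithm matches the regret of Theorem~\ref{main1}.
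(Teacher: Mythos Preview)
Your approach is essentially the paper's: run Exp3.T on doubling periods, bound each period's regret via the three-component decomposition of Theorem~\ref{main1}, and sum geometrically using $\sum_i \sqrt{T_i} = O(\sqrt{T})$; you also correctly identify that the exponent $3/2$ in $\delta_i$ is there precisely so that $\sum_i F(T_i) = O(1)$ rather than $\Theta(\log T)$. The paper carries out the per-period bound by re-deriving the three components $R_{Exp3}$, $R_{wasted}$, $R_{loss\_monitor}$ explicitly rather than invoking Theorem~\ref{main1} as a black box, and it bounds $N_i$ by $N$ in every period (looser than your $\sum_i N_i \leq N + (I_{\max}+1)$), but these are presentational differences.

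One point you pass over that the paper does address: for very small $T_i$ the choice $\delta_i = T_i^{-3/2}\sqrt{K/\ln K}$ can exceed $1$ (and hence $1/(1-\delta_i)$ blows up, the trend detector has no guarantee, etc.). The paper handles this by observing that once $T_i \geq T^{1/3}$ the parameters are valid, and simply writes off the earlier periods---whose total length is $O(T^{1/3})$---as wasted rounds absorbed into the final bound. Your proposal implicitly assumes $\delta_i < 1$ from the start; to be complete you should add this small-period caveat.
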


\begin{proof}

We follow the same steps as in the proof of Theorem~\ref{main1}. We divide the regret incurred into three different components: regret due to Exp3 algorithm, due to the wasted intervals during detection and due to the loss monitoring plays. Compared to the proof in Theorem~\ref{main1} the only difference is that here we have to sum regret of Exp3.T over multiple runs. If $T$ is the actal length of play, then the number of times we run Exp3.T is at most $\log T$. Regret due to Exp3 algorithm (running and restarting) is:
\[
	R_{Exp3} \leq \sum\limits_{i = 0}^{\lceil \log T \rceil}  \left(  \frac{K}{2} T_i \eta_{i} + \frac{(N_i - 1 + F(T_i)) \ln K}{\eta_{i}} \right)
\]
where $N_i$ and $F(T_i)$ are the number of changes in trend and number of false detections in $i$th run of Exp3.T respectively. As before, 
\begin{equation*}
\begin{aligned}
F(T_i) \leq & \enspace \delta_i \left( \frac{T_i}{|I|_i} + 1 \right) \\
	= &  \enspace \frac{1}{T_i^{3/2}}\sqrt{\frac{K}{ \ln K}} \cdot  \left(  \frac{T_i}{K t^*_i} \sqrt{\frac{K t^*_i \ln K}{T_i}}  + 1 \right) \leq \frac{2}{T_i}
\end{aligned}
\end{equation*}

Using this bound in above inequality
\begin{equation*}
\begin{aligned}
	R_{Exp3} \leq & \enspace  \sum\limits_{i=0}^{\lceil \log T \rceil}  \left[ \frac{K T_i \eta_{i}}{2}  + \frac{N \ln K}{\eta_{i}} + \frac{2 \ln K}{T_i \eta_{i}}  \right] \\
	\leq & \enspace \sqrt{K \ln K} \cdot \sum\limits_{i}^{\lceil \log T \rceil}  \left(  \frac{\sqrt{T_i}}{2} + N \sqrt{T_i} + \frac{2}{\sqrt{T_i}} \right)\\
	\leq &  \enspace C_1 \left( \sqrt{TK \ln K} + N \sqrt{ TK \ln K} \right)
\end{aligned}
\end{equation*}

The inequalities follow by using parameters $\eta_i$ and $\delta_i$ as defined in the algorithm. For ease of representation, we capture all constants with a single constant $C_1$. Regret incurred due to wasted intervals is

\begin{equation*}
\begin{aligned}
R_{wasted} \leq & \enspace \sum\limits_{i=0}^{\lceil \log T \rceil} N_i \left( 1 + \frac{1}{1 - \delta_i} \right) |I_i|\\
	\leq & \enspace \sum\limits_{i=0}^{\lceil \log T \rceil} 2N \left( 1 + \delta_i \right) \frac{K t^*_i}{\gamma_i} \\
	\leq & \enspace  \sum\limits_{i=0}^{\lceil \log T \rceil}  \frac{4 N K t^*_i}{\gamma_i}\\
	\leq & \enspace  \sum\limits_{i=0}^{\lceil \log T \rceil} N \sqrt{\frac{t_i^* T_i K}{\ln K}} \\
	\leq & \enspace C_2 \cdot \left(  N \sqrt{\frac{TK t^*}{\ln K}} \right)
\end{aligned}
\end{equation*}

Here we use the fact that $t^*_i = O(t^*)$, the detector complexity had we known $T$ apriori. All the constants involved in the above inequality are captured by $C_2$. Similarly, regret due to loss monitoring plays is:
\begin{equation*}
\begin{aligned}
R_{loss\_monitor} \leq & \enspace K \sum\limits_{i=0}^{\lceil \log T \rceil} t^*_i  \frac{T_i}{|I_i|}\\
	\leq & \enspace \sum\limits_{i=0}^{\lceil \log T \rceil} \gamma_i T_i\\
	\leq & \enspace C_3 \cdot \left( \sqrt{K T t^* \ln K} \right)
\end{aligned}
\end{equation*}
\noindent
where the constant $C_3$ captures the constants involved. Combining the above mentioned bounds we get the desired claim. This bound is only a constant factor worse than the bound proved in Theorem~\ref{main1}.

It is easy to verify that the above analysis holds if $\delta_i$ is of the order of $\delta$ and this condition is met when $T'$ is of order at least $T^{\frac 13}$. If, however, $T'$ is not a good estimate of $T$ in the above sense, the output of trend detection procedure in initial runs will not be correct with sufficiently high probability and hence aforementioned guarantees do not hold. We account for the regret incurred in the first few runs (till $T_i \geq T^{\frac 13}$) by simply disregarding all of them and consider them as \textit{wasted} rounds.
\end{proof}

The principle of trend detection and restarting of a base algorithm (Exp3 in our context) according to changes in the trend can be extended to any multi-armed bandit algorithm for adversarial setting. The final regret guarantee obtained naturally depends on the performance of the base algorithm. We notice however that due to the necessary number of exploration rounds, no base algorithm can allow us to achieve regret $o(\sqrt{T})$. In particular, by choosing an appropriate base algorithm, our framework can be adjusted to a number of different loss structures and problem settings. In the following section, we use exactly this principle to design an algorithm to minimize regret with respect to the $m$ best actions.

\section{ Extension to Top-$m$ Actions}

In this section, we show how to extend the ideas introduced above to a setting where in each round we choose $m > 1$ actions out of the $K$ available. For this variant of the problem, the Exp3 algorithm cannot be used and hence we use a more general approach proposed by Audibert et al \cite{Bubeck2012}. This approach, named Online Stochastic Mirror Descent (OSMD) is based on a powerful generalization of gradient descent for sequential decision problems. Similar to Exp3, the regret guarantee given by this technique is with respect to the best combination of actions in hindsight and holds even for adversarial losses. We refer the reader to \cite{BubeckBook} for a thorough treatment of the technique. In our proposed algorithm, OSMD.T, we use the technique as a black box and only need the final guarantee.

\begin{lemma}
\label{base3}
The regret of OSMD algorithm in the $m$-set setting with $F(x) = \sum\limits_{i=1}^{K} x_i \log x_i - \sum\limits_{i=1}^{K} x_i$ and learning rate $\eta$ satisfies 
\[
R_T \leq \frac{\eta T K}{2} + \frac{m \log \frac{K}{m}}{\eta}
\]
\end{lemma}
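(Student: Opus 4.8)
The plan is to invoke the generic regret guarantee for Online Stochastic Mirror Descent from \cite{BubeckBook} and then specialize each of its two terms to the negentropy regularizer $F$ and the uniform-matroid decision set. Recall that for a Legendre regularizer $F$ and unbiased loss estimates $\tilde{\ell}_t$ fed to the algorithm, the OSMD bound has the form
\[
\mathbb{E}[R_T] \leq \frac{\sup_{u \in \mathcal{K}} F(u) - F(x_1)}{\eta} + \frac{\eta}{2}\sum_{t=1}^{T} \mathbb{E}\Big[ \|\tilde{\ell}_t\|^2_{(\nabla^2 F(x_t))^{-1}} \Big],
\]
where $\mathcal{K}$ is the convex hull of the indicator vectors of the $m$-element subsets of $[K]$ (the base polytope of the rank-$m$ uniform matroid) and $x_1$ is the initial play, which I would take to be the uniform point $x_1 = (m/K, \dots, m/K)$. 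It then suffices to show that the first term equals $m\log(K/m)/\eta$ and the second is at most $\eta T K / 2$.

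For the penalty term, I would use that $\sum_i x_i = m$ on $\mathcal{K}$, so $F(x) = \sum_i x_i \log x_i - m$ and it remains to control $\sum_i x_i \log x_i$. Since this is convex, its supremum over $\mathcal{K}$ is attained at a vertex, where $m$ coordinates equal $1$ and the rest $0$, giving value $0$; at the uniform starting point it equals $m\log(m/K)$. Hence $\sup_{u \in \mathcal{K}} F(u) - F(x_1) = (0 - m) - (m\log(m/K) - m) = m\log(K/m)$, which is exactly the claimed second term.

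For the stability term, the key observation is that the Hessian of the negentropy is $\nabla^2 F(x) = \mathrm{diag}(1/x_i)$, so the inverse-Hessian local norm is $\|g\|^2_{(\nabla^2 F(x_t))^{-1}} = \sum_i x_{t,i}\, g_i^2$. I would plug in the standard semi-bandit estimator $\tilde{\ell}_{t,i} = \ell_{t,i}\,\mathbf{1}[i \in I_t]/x_{t,i}$, where $x_{t,i}$ is the marginal probability that coordinate $i$ is selected (the feedback is semi-bandit here, since the algorithm observes the loss of every chosen action). Taking the conditional expectation over the played set gives $\mathbb{E}[\sum_i x_{t,i}\tilde{\ell}_{t,i}^2] = \sum_i x_{t,i}\cdot \ell_{t,i}^2/x_{t,i} = \sum_i \ell_{t,i}^2 \leq K$, using $\ell_{t,i} \in [0,1]$. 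Summing over the $T$ rounds bounds the stability term by $\eta T K / 2$, and adding the two pieces yields the lemma.

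The step needing the most care is the stability term: one must verify that the $1/x_{t,i}$ inflation of the unbiased estimator is cancelled \emph{exactly} by the $x_{t,i}$ weight produced by the inverse Hessian of the entropy, which is precisely why the negentropy is the correct regularizer for this decision set. A secondary technical point is justifying the generic OSMD inequality itself---that $F$ is Legendre on $\mathcal{K}$ and that the Bregman projection back onto $\mathcal{K}$ after each mirror step does not increase regret---but since the paper uses OSMD as a black box, I would simply cite the corresponding theorem in \cite{Bubeck2012, BubeckBook} rather than reprove it.
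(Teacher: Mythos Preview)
The paper does not actually prove this lemma: it is stated as a black-box guarantee imported from \cite{Bubeck2012, BubeckBook}, with the sentence immediately preceding it explicitly saying ``we use the technique as a black box and only need the final guarantee.'' So there is nothing in the paper to compare against beyond the citation.

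Your derivation is the standard one and is correct. The penalty computation is right: on the base polytope $\sum_i x_i = m$, the entropy term attains its maximum $0$ at any vertex and equals $m\log(m/K)$ at the uniform point, giving the $m\log(K/m)/\eta$ term. The stability computation is also right: with the semi-bandit importance-weighted estimator and the diagonal inverse Hessian $\mathrm{diag}(x_{t,i})$, the per-round expected local-norm term collapses to $\sum_i \ell_{t,i}^2 \le K$. Your closing remark that the generic OSMD inequality itself should be cited rather than reproved is exactly what the paper does, so your write-up is strictly more detailed than the paper's treatment while remaining consistent with it.
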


Here $F(x)$ is a Legendre function and is a parameter used within the OSMD technique. The trend detection algorithm in this case uses the same idea as in Algorithm~\ref{trend} except that instead of a single action we now check if the set of $m$ best actions have changed with probability at least $1 - \delta$. Even in this case, we denote by $t^*$ the number of samples needed for each action to ensure that trend detection works with above mentioned probability. Bounds derived in Lemma~\ref{lem1} and Lemma~\ref{lem2} apply in this case too.

There are only a few differences in Algorithm~\ref{osmd} as compared to Algorithm~\ref{Exp3.T}. Firstly, instead of using Exp3 for regret minimization we use the more sophisticated technique of OSMD. This algorithm gives tight regret guarantees and is polynomial time computable\footnote{The OSMD technique can also be used when there are more generic combinatorial constraints on the set of actions chosen each round. For these generic cases, the algorithm need not be poly time computable. However, for the uniform matroid case (under consideration here) it is in fact poly time computable }. Secondly, the trend detection algorithm changes slightly as mentioned above. Finally, since we choose $m$ actions in every round, we need a factor of $m$ lesser number of loss monitoring plays. Alternately, the size of an interval $I$ is chosen to be $\frac{Kt^*}{m \gamma}$.

\begin{algorithm}
  \caption{OSMD.T}
  \label{osmd}
  
  \begin{algorithmic}
   \LineComment { Parameters: $\delta$, $\gamma$ and $\eta$}
   
	\State Set interval length $ |I| = \frac{K t^*}{m \gamma}$
	\For{ each interval $I$}
		\State Schedule $\leftarrow$ Make\_Schedule($I$)  
		\For{ $t = 1, 2 \cdots |I|$ }
			\If{Schedule($t$) = OSMD Play}
				\State Call OSMD\_play()
			\Else
				\State Call LM\_play(Schedule($t$) )
			\EndIf
		\EndFor
		
		\If{trendDetection() == True}
			\State Restart OSMD
		\EndIf
	\EndFor
  \end{algorithmic}
\end{algorithm}

\begin{theorem}
\label{main2}
The expected regret of OSMD.T is
\begin{equation*}
R_T = O \left( \frac{ Nm \sqrt{ TK \ln \left( \frac{TK}{m} \right)  } }{\Delta_{sp}} \right).
\end{equation*}
\end{theorem}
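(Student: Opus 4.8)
The plan is to reuse, essentially verbatim, the three–part regret decomposition from the proof of Theorem~\ref{main1}, swapping the Exp3 guarantee of Lemma~\ref{base2} for the OSMD guarantee of Lemma~\ref{base3} and then tracking every place where choosing $m$ actions per round changes the bookkeeping. Concretely, I would write $R_T \le R_{OSMD} + R_{wasted} + R_{loss\_monitor}$, where $R_{OSMD}$ is the cumulative regret of the base learner over all of its restarts, $R_{wasted}$ accounts for the intervals lost to a trend change or a delayed detection, and $R_{loss\_monitor}$ accounts for the rounds spent on loss–monitoring plays. Since the trend–detection analysis is unchanged (Lemmas~\ref{lem1} and~\ref{lem2} apply as stated, now asking whether the best \emph{$m$-set} has changed), the expected number of false detections is still $F(T) \le \delta\left(\frac{T}{|I|}+1\right)$, but now with $|I| = \frac{K t^*}{m\gamma}$.

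For $R_{OSMD}$ I would invoke Lemma~\ref{base3} once per run; with $N-1$ genuine changes and at most $F(T)$ false detections this gives
\[
R_{OSMD} \le \frac{\eta T K}{2} + \frac{\left(N-1+F(T)\right) m \log\frac{K}{m}}{\eta}.
\]
The two genuinely new points, relative to Theorem~\ref{main1}, are: (i) each round now incurs the summed loss of $m$ actions, so the per-round regret of a wasted or monitoring round is bounded by $m$ rather than $1$; and (ii) each monitoring round yields $m$ samples simultaneously, so only $\frac{K t^*}{m}$ monitoring rounds are needed per interval. Hence $R_{wasted} \le m\cdot N\left(1+\frac{1}{1-\delta}\right)|I|$ and $R_{loss\_monitor} \le m\cdot \frac{K t^*}{m}\left(\frac{T}{|I|}+1\right)$, where the explicit $m$ in $|I|$ partially cancels these factors.

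I would then fix $\eta$, $\gamma$ and $\delta$ in direct analogy with Theorem~\ref{main1}, using $m\log\frac{K}{m}$ in place of $\ln K$ and inserting the extra powers of $m$ introduced by $|I|$, so that after balancing the dominant contribution is the restart/wasted term of order $Nm\sqrt{TK t^*}$. Substituting the detector complexity $t^* = O\!\left(\frac{\ln(TK/m)}{\Delta_{sp}^2}\right)$ from Lemmas~\ref{lem1} and~\ref{lem2} (with $\delta$ now also depending on $m$, which is what produces the $\ln(TK/m)$ inside the logarithm) turns $\sqrt{t^*}$ into $\frac{\sqrt{\ln(TK/m)}}{\Delta_{sp}}$, and every remaining term collapses into $O\!\left(\frac{Nm\sqrt{TK\ln(TK/m)}}{\Delta_{sp}}\right)$, as claimed. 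The main obstacle I anticipate is purely the careful accounting of the factor $m$: it enters with three different exponents — linearly through the per-round loss of magnitude $m$, inversely through the $m$-fold gain in monitoring efficiency, and as $m\log\frac{K}{m}$ in the OSMD bound — and $\eta,\gamma$ must be tuned so that these combine into a single factor $m$ in the leading term rather than $\sqrt m$ or $m^{3/2}$. Everything else is a mechanical repeat of the balancing computation already performed for Theorem~\ref{main1}.
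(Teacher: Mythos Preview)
Your proposal is correct and follows essentially the same three-part decomposition and parameter-balancing argument as the paper, including the same treatment of $R_{OSMD}$, the factor-$m$ inflation of $R_{wasted}$, and the $m$-fold reduction in monitoring rounds via $|I|=\frac{Kt^*}{m\gamma}$. The only cosmetic difference is that you (more carefully) charge $m$ per loss-monitoring round whereas the paper charges $1$, but since that term is not the dominant one, both computations collapse to the same $O\!\left(\frac{Nm\sqrt{TK\ln(TK/m)}}{\Delta_{sp}}\right)$ bound after setting $\eta = m\sqrt{\ln(K/m)/(TK)}$, $\delta=\sqrt{mK/T}$, and $\gamma=\frac{1}{m}\sqrt{Kt^*/T}$.
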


\begin{proof}
The main steps of analysis in this setting are exactly the same as Theorem~\ref{main1}. The component of regret due to OSMD algorithm is
\[
	R_{osmd} \leq \frac{\eta T K}{2} + ( N - 1 + F(T) ) \frac{m \log \frac{K}{m}}{\eta},
\]

where $F(T)$ is the number of false detections as before and given by $ F(T) \leq \delta \left( \frac{T}{|I|} + 1 \right) $. This inequality follows by Lemma~\ref{base3} and considering the fact that the algorithm is restarted at most $N-1 + F(T)$ times. Following the same arguments as in Theorem~\ref{main1}, the regret incurred on account of wasted intervals is at most:
\[
R_{wasted} \leq N m \left( 1 + \frac{1}{1 - \delta} \right) |I|.
\]
Unlike Theorem~\ref{main1}, each wasted round incurs regret of $m$ instead of $1$ since we can't guarantee regret for any of the chosen actions. Finally, since both the number of loss monitoring plays and the length of an interval is reduced by a factor of $m$, the regret incurred on account of loss monitoring plays is:
\[
R_{loss\_monitoring} \leq \left \lceil \frac{K t^*}{m} \right \rceil  \cdot \left \lceil \frac{T}{|I|} \right \rceil = O \left( \gamma T \right).
\]

Putting the above bounds together,
\begin{equation}
\begin{multlined}
\vspace{2mm}
R_T =  \enspace R_{osmd} + R_{wasted} + R_{loss\_monitoring} \\
\vspace{2mm}
	\leq \enspace \frac{\eta T K}{2} + ( N - 1 + \frac{\delta \gamma mT}{K t^*} ) \frac{m \log \frac{K}{m}}{\eta} +\\
	\vspace{2mm}
	 N m \left( 1 + \frac{1}{1 - \delta} \right) \frac{K t^*}{\gamma m} + \gamma T
\end{multlined}
\end{equation}

By setting $\eta = m \sqrt{\frac{\ln \left( K/m \right)}{TK}}$, $\delta = \sqrt{\frac{mK}{T}}$ and $\gamma = \frac{1}{m} \sqrt{\frac{K t^*}{T}}$ we get

\begin{equation}
\begin{multlined}
\vspace{2mm}
R_T \leq m \sqrt{TK \ln \frac{K}{m}} + N m \sqrt{TK \ln \frac{K}{m}}  \\
\vspace{2mm}
+ \sqrt{mTK \ln \frac{K}{m} t^*} + 2Nm \sqrt{TK t^*}  \\
\vspace{2mm}
+ 2NK \sqrt{m t^*} + \frac{1}{m}\sqrt{t^* TK}
\end{multlined}
\end{equation}

Alternately, $R_T = O \left( \frac{ Nm \sqrt{ TK \ln \left( \frac{TK}{m} \right)  } }{\Delta_{sp}} \right)$.

\end{proof}

\section{Simulations}

Since our proposed algorithm comes under the domain of active learning, it is not possible to reliably use any fixed data set. Instead, to assess the performance of our algorithm we shall use artificially constructed loss generation models; a standard approach for problems of this nature.

\begin{figure*}[!ht]
    \begin{subfigure}[b]{0.48\textwidth}
        \includegraphics[width=\textwidth]{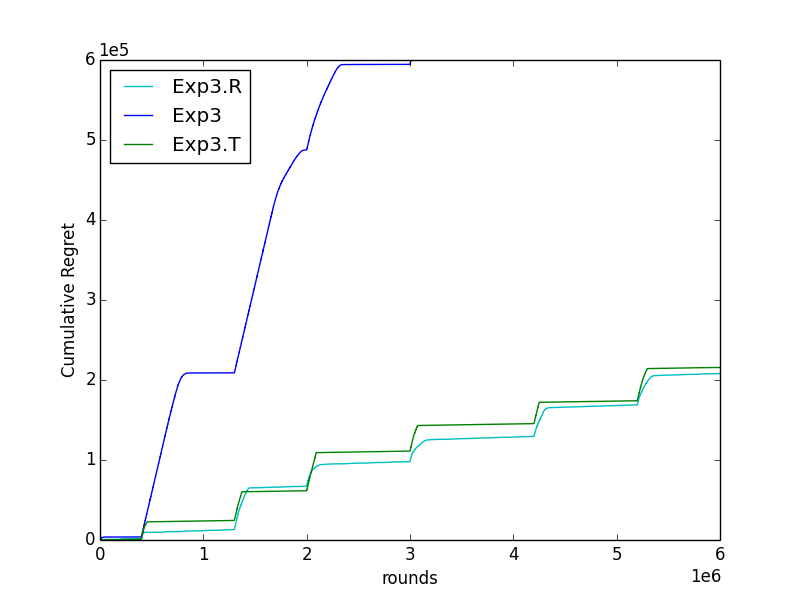}
        \caption{$K = 2$, $\Delta_{sp} = 0.55$}
        \label{fig:DSR1}
    \end{subfigure}
    ~ 
    \begin{subfigure}[b]{0.48\textwidth}
        \includegraphics[width=\textwidth]{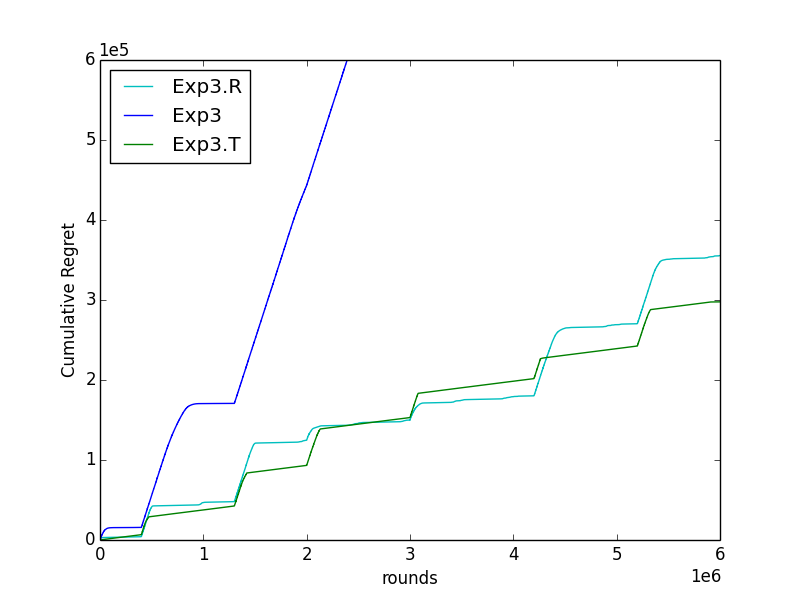}
        \caption{$K = 10$, $\Delta_{sp} = 0.40$}
        \label{fig:DSR2}
    \end{subfigure}
     \caption{DSR Model}\label{DSR}
\end{figure*}

    
\begin{figure*}[!ht]
    \begin{subfigure}[!ht]{0.48\textwidth}
        \includegraphics[width=\textwidth]{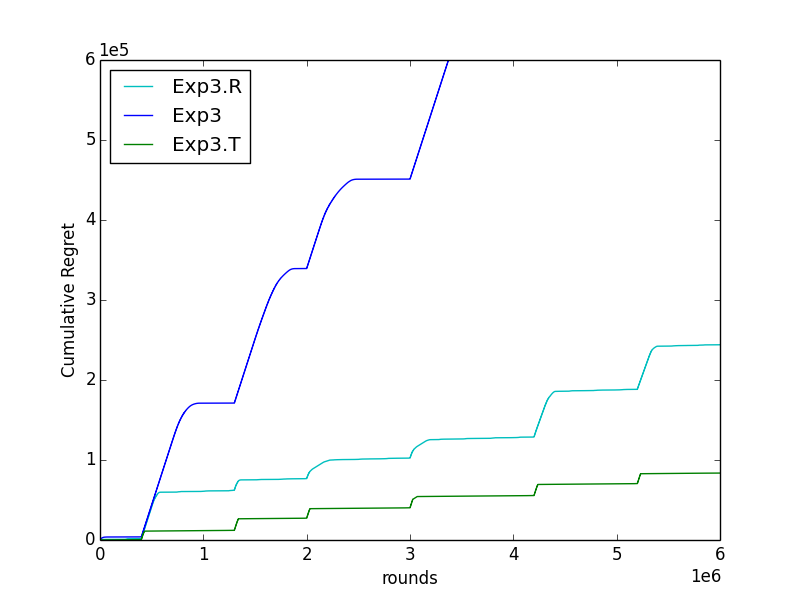}
        \caption{$K = 2$, $\Delta_{sp} = 0.40$}
        \label{fig:ARG1}
    \end{subfigure}
    ~ 
    \begin{subfigure}[!ht]{0.48\textwidth}
        \includegraphics[width=\textwidth]{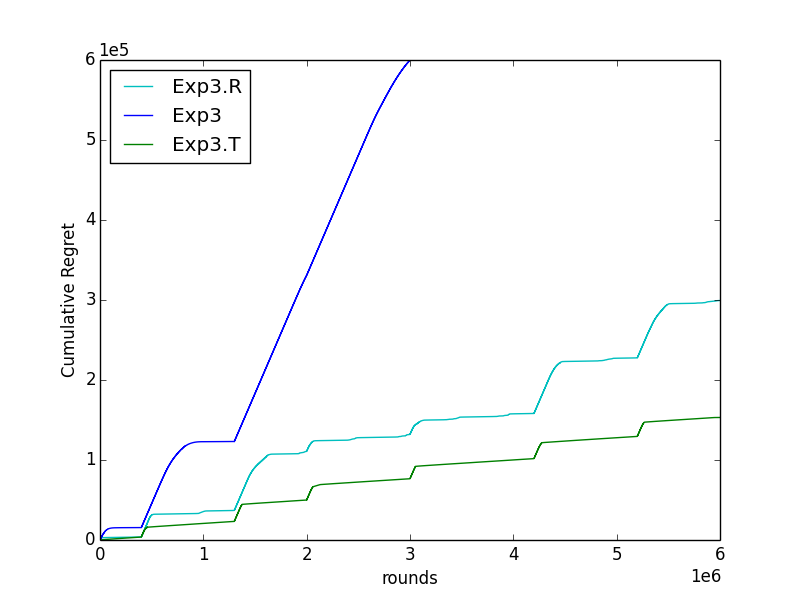}
        \caption{$K = 10$, $\Delta_{sp} = 0.30$}
        \label{fig:ARG2}
    \end{subfigure}
    \caption{ARG Model}\label{AGR}
\end{figure*}

For each of the two models introduced, we compare the performance of Exp3.T algorithm with Exp3.R\cite{Exp3R}, an algorithm closest in spirit to our work. To emphasize that we obtain \textit{switching regret} guarantee, a stronger benchmark than conventionally used, we also compare our algorithm with Exp3 i.e. the performance, measured in terms of the cumulative loss, is with respect to a switching strategy that chooses the \textit{best} action in each trend. Each experiment is run independently 10 times and the mean of the results is shown in figures.

\textbf{Experiment 1: DSR model} Within each trend, we set the bias of the best action to $0.10$ and biases of other actions for the case when $\Delta_{sp} = 0.4$ is set to $0.5$ while for the case when $\Delta_{sp} = 0.55$, they are set to $0.65$. For each of the loss models, we run the experiment with $K = 2$ and $K = 10$ actions respectively. We have constructed the dynamic stochastic loss model in our experiments as a representative of a worst case scenario i.e. we do not assume any information about the loss structure except for the separation parameter $\Delta_{sp}$ (refer Fig.~\ref{DSR}). The performance of Exp3.T is almost identical to Exp3.R, an algorithm specifically designed for stochastic model. For a smaller gap, however, our algorithm still manages to do marginally better than Exp3.R. We note here that the parameters of Exp3.R algorithm are set such that the assumptions required for the algorithm hold.
	
\textbf{Experiment 2: ARG model} We design the semi-structured property of ARG model as follows: For $\Delta_{sp} = 0.3$ case, within each trend the loss of best action is a sequence of 100 consecutive 0s followed by 100 consecutive 1s. In the same rounds, losses of sub-optimal actions are 1 and 0.6 respectively. For $\Delta_{sp} = 0.4$ case, losses of the best action are same as before but losses of sub-optimal actions are kept constant at 0.9. These loss structures are chosen as representatives of the possible instances of the ARG model. The advantage of our algorithm is clearly highlighted in this more general model. The worse performance of Exp3.R is expected since it assumes more structure than provided by the model; Exp3.T in contrast is able to exploit the little structure available and detect changes much faster.

There exists a subtle case when the guarantees presented in this paper do not hold. This happens when the length of the interval is comparable to the total run time of algorithm i.e. $O(T)$. For example, if the length of interval is $ T / 2$, then Exp3.T does not provide any switching regret guarantee since for the first two intervals Exp3.T behaves exactly like Exp3. Therefore in worst case, the regret bounds presented here are void but the bounds of Exp3 still apply.

\section{Conclusion}

We have proposed a new paradigm for regret minimization and defined a broader class of loss models where our algorithm is applicable. We have used this paradigm for the regret minimization problem when one chooses either a single action or a basis of a uniform matroid in each round. For these problems we proposed algorithms and gave switching regret bounds of $\tilde{O}( N\sqrt{TK})$ and $\tilde{O}( Nm \sqrt{TK})$ respectively. Such a paradigm is particularly suitable for regret minimization algorithms where one cannot distinguish exploration and exploitation steps, for example OSMD. Extension of this paradigm to more general problems like online linear optimization is currently in progress.

\bibliographystyle{plain}
\bibliography{references}

\begin{thebibliography}{10}

\bibitem{Exp3R}
Robin Allesiardo and Rapha{\"e}l F{\'e}raud.
\newblock Exp3 with drift detection for the switching bandit problem.
\newblock In {\em Data Science and Advanced Analytics (DSAA), 2015. 36678 2015.
  IEEE International Conference on}, pages 1--7. IEEE, 2015.

\bibitem{Bubeck2010}
Jean-Yves Audibert and S{\'e}bastien Bubeck.
\newblock Regret bounds and minimax policies under partial monitoring.
\newblock {\em Journal of Machine Learning Research}, 11(Oct):2785--2836, 2010.

\bibitem{Bubeck2012}
Jean-Yves Audibert, S{\'e}bastien Bubeck, and G{\'a}bor Lugosi.
\newblock Regret in online combinatorial optimization.
\newblock {\em Mathematics of Operations Research}, 39(1):31--45, 2013.

\bibitem{Auer-UCB}
Peter Auer, Nicolo Cesa-Bianchi, and Paul Fischer.
\newblock Finite-time analysis of the multiarmed bandit problem.
\newblock {\em Machine learning}, 47(2-3):235--256, 2002.

\bibitem{Auer2002}
Peter Auer, Nicolo Cesa-Bianchi, Yoav Freund, and Robert~E Schapire.
\newblock The nonstochastic multiarmed bandit problem.
\newblock {\em SIAM journal on computing}, 32(1):48--77, 2002.

\bibitem{BubeckBook}
S{\'{e}}bastien Bubeck and Nicolo Cesa-Bianchi.
\newblock {\em {Regret Analysis of Stochastic and Nonstochastic Multi-armed
  Bandit Problems}}, volume~5.
\newblock 2012.

\bibitem{cd2}
Joao Gama, Pedro Medas, Gladys Castillo, and Pedro Rodrigues.
\newblock Learning with drift detection.
\newblock In {\em Brazilian Symposium on Artificial Intelligence}, pages
  286--295. Springer, 2004.

\bibitem{Garivier}
Aur{\'e}lien Garivier and Eric Moulines.
\newblock On upper-confidence bound policies for non-stationary bandit
  problems.
\newblock {\em arXiv preprint arXiv:0805.3415}, 2008.

\bibitem{Hazan}
Elad Hazan and Satyen Kale.
\newblock Better algorithms for benign bandits.
\newblock {\em Journal of Machine Learning Research}, 12(Apr):1287--1311, 2011.

\bibitem{Hoeffding}
Wassily Hoeffding.
\newblock Probability inequalities for sums of bounded random variables.
\newblock {\em Journal of the American statistical association},
  58(301):13--30, 1963.

\bibitem{cd1}
T~Ryan Hoens, Robi Polikar, and Nitesh~V Chawla.
\newblock Learning from streaming data with concept drift and imbalance: an
  overview.
\newblock {\em Progress in Artificial Intelligence}, 1(1):89--101, 2012.

\bibitem{DUCB}
Levente Kocsis and Csaba Szepesv{\'a}ri.
\newblock Discounted ucb.
\newblock In {\em 2nd PASCAL Challenges Workshop}, pages 784--791, 2006.

\bibitem{Kveton}
Branislav Kveton, Zheng Wen, Azin Ashkan, and Csaba Szepesvari.
\newblock Tight regret bounds for stochastic combinatorial semi-bandits.
\newblock In {\em Artificial Intelligence and Statistics}, pages 535--543,
  2015.

\bibitem{Robbins}
Herbert Robbins.
\newblock Some aspects of the sequential design of experiments.
\newblock In {\em Herbert Robbins Selected Papers}, pages 169--177. Springer,
  1985.

\bibitem{Seldin}
Yevgeny Seldin and Aleksandrs Slivkins.
\newblock One practical algorithm for both stochastic and adversarial bandits.
\newblock In {\em International Conference on Machine Learning}, pages
  1287--1295, 2014.

\bibitem{Thompson}
William~R Thompson.
\newblock On the likelihood that one unknown probability exceeds another in
  view of the evidence of two samples.
\newblock {\em Biometrika}, 25(3/4):285--294, 1933.

\bibitem{Uchiya}
Taishi Uchiya, Atsuyoshi Nakamura, and Mineichi Kudo.
\newblock Algorithms for adversarial bandit problems with multiple plays.
\newblock In {\em Algorithmic learning theory}, pages 375--389. Springer, 2010.

\end{thebibliography}

\end{document}